\newtheorem{theorem}{Theorem}[section]
\newenvironment{proof}[1][Proof]{\begin{trivlist}
\item[\hskip \labelsep {\bfseries #1}]}{\end{trivlist}}
\newcommand{\qed}{\nobreak \ifvmode \relax \else
      \ifdim\lastskip<1.5em \hskip-\lastskip
      \hskip1.5em plus0em minus0.5em \fi \nobreak
      \vrule height0.75em width0.5em depth0.25em\fi}
\title{Metric Learning across Heterogeneous Domains by Respectively Aligning Both Priors and Posteriors}
\author{Qiang Qian $^{1}$,Songcan Chen$^{1,2}$\footnote{Corresponding author: Tel: +86-25-84896481 Ext. 12221; Fax: +86-25-84892400; E-mail: s.chen@nuaa.edu.cn(S. Chen) qian.qiang.yx@gmail.com(Q. Qian)} }
\begin{document}
\maketitle
1. Department of Computer Science and Engineering, Nanjing University of Aeronautics and Astronautics, Nanjing 210016, P. R. China \\ \newline
2. State Key Laboratory for Novel Software Technology, Nanjing University, Nanjing 210093, P. R. China
\section{Abstract}
In this paper, we attempts to learn a single metric across two heterogeneous domains where source domain is fully labeled and has many samples while target domain has only a few labeled samples but abundant unlabeled samples. 
To the best of our knowledge, this task is seldom touched. 
The proposed learning model has a simple underlying motivation: all the samples in both the source and the target domains are mapped into a common space, where both their priors $P(sample)$s and their posteriors $P(label|sample)$s are forced to be respectively aligned as much as possible.
We show that the two mappings, from both the source domain and the target domain to the common space, can be reparameterized into a single positive semi-definite(PSD) matrix.
Then we develop an efficient Bregman Projection algorithm to optimize the PDS matrix over which a LogDet function is used to regularize. 
Furthermore, we also show that this model can be easily kernelized and verify its effectiveness in cross-language retrieval task and cross-domain object recognition task.

\newpage
\section{Introduction}
\label{sec:introduction}
Metric learning lies in the heart of many machine learning tasks such as clustering and recognition, thus has been extensively studied by many researchers. 
However, most of the works only focus on learning metric for a single domain\cite{liu2010joint,xing2002distance,bian2012constrained,wang2011semisupervised,liu2010joint,weinberger2006distance}, leaving metric learning across multiple-domains seldom touched.
In this paper, we introduce the Metric Learning across Heterogeneous Domains(MLHD) model to learn a single metric across two heterogeneous domains, which means not only their sample distributions but also their feature spaces are different.
Between the two domains, the source domain, which has been collected beforehand, are fully labeled and has many samples. 
While the target domain has only a few labeled samples but has abundant unlabeled samples because collecting labels is expensive and tedious. 
Since the samples in the two domains may disagree on their feature dimensions, the metrics between them can not be calculated directly. 
A simple and direct idea, as depicted in Figure \ref{fig:MappingCommonSpace} is to linearly map the samples in both the domains into a common space where their metrics can be calculated.
And at the same time, the two mappers, from both the source domain and the target domain to the common space, should satisfy some constraints:
First, samples sharing the same labels in both the domains should be close to each other in the common space.
In other words, posterior $P(label|sample)$s of both the domains in the common space should be aligned as closely as possible.
As demonstrated in Figure \ref{fig:subfig:AlignPosterior}, the red circles and squares are close to each other, and so are the blue circles and squares in that common space.
However, this is not enough because the target domain only owns a few labeled samples. 
And if the posteriors are aligned only based on such a small portion of labeled samples in the target domain, they can be likely biased and lead to poor generalization.
Figure \ref{fig:subfig:AlignPosterior} shows that the gray unlabeled samples are aligned poorly though the color labeled samples are aligned well.
To alleviate this problem, we also need to force the priors $P(sample)$s of both the domains are aligned as much as possible.
Since the target domain usually contains many unlabeled samples, the estimation of its prior is relatively more reliable. 
And by aligning the priors, the probable bias introduced by the poor posterior of the target domain can be corrected to some extent. 
Figure \ref{fig:subfig:AlignPriorPosterior} shows that both the unlabeled and the labeled samples are well located in the common space by respectively aligning the priors and the posteriors. 

\begin{figure}[H]
\centering
\subfigure[Aligning posteriors]{
\includegraphics[scale=0.2]{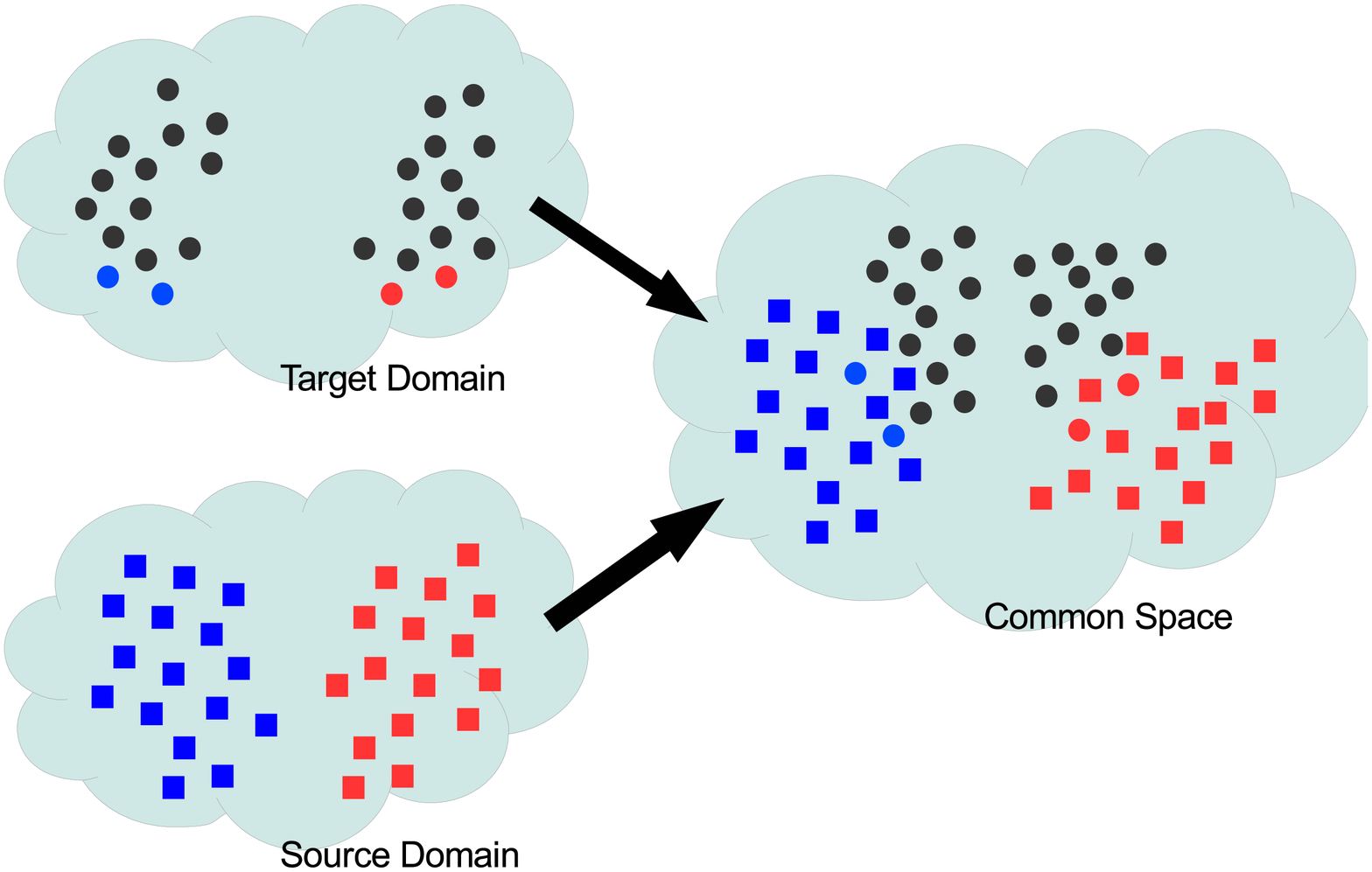}
\label{fig:subfig:AlignPosterior}
}
\subfigure[Aligning priors and posteriors]{
\includegraphics[scale=0.2]{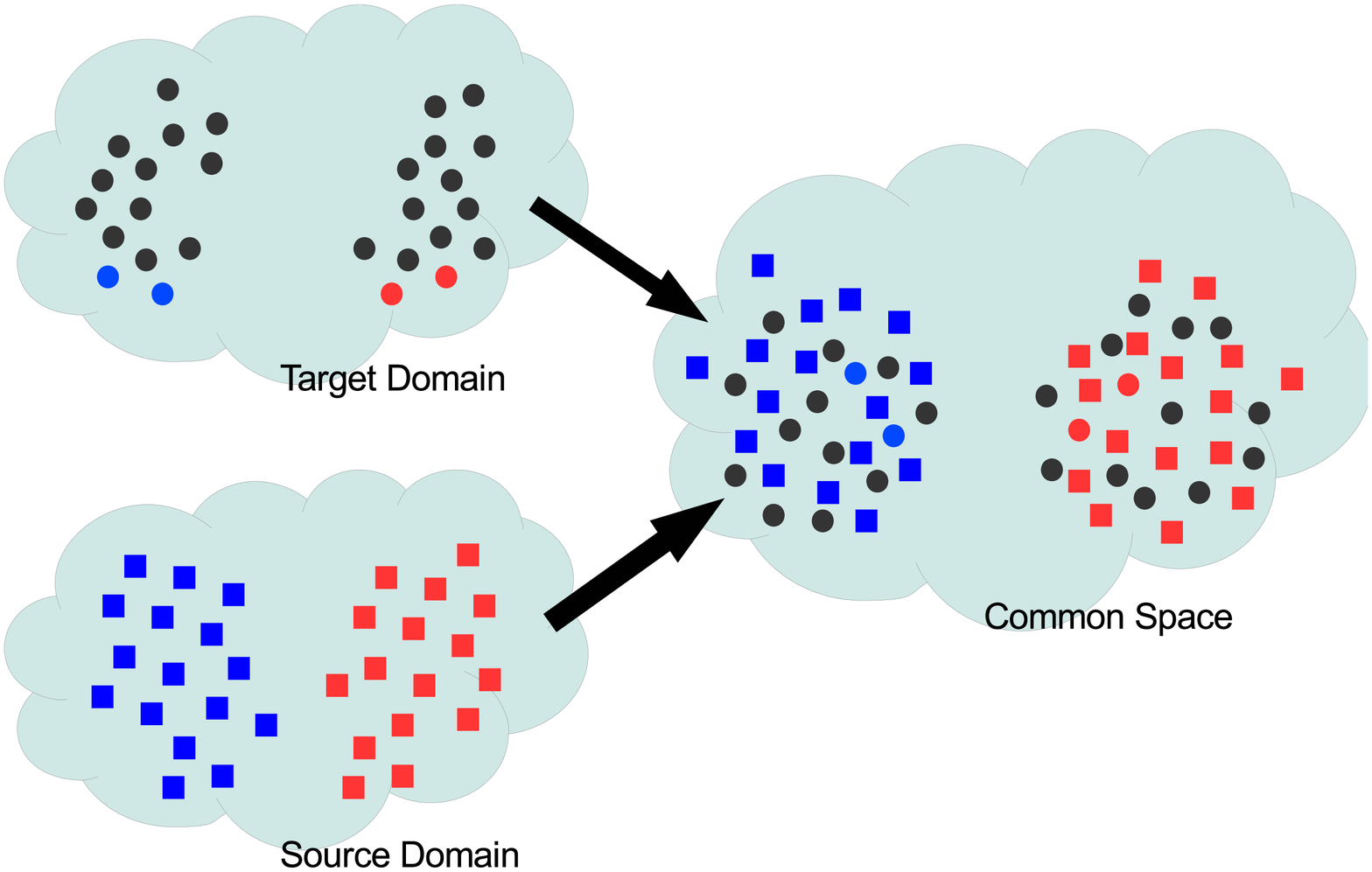}
\label{fig:subfig:AlignPriorPosterior}
}
\caption{Mapping the samples of the source and the target domains to a common space. Circles and squares are the samples in the target and the source domains respectively. Colors indicate the labels of the samples. Gray represents the unlabeled samples. }
\label{fig:MappingCommonSpace}
\end{figure}

One advantage of our model, deserved to be highlighted, is that the learned metric is actually defined in the common space. 
That means we can calculate the metric across the source domain and the target domain. 
This advantage brings much convenience for some practical applications.
For example, this metric can directly serve the retrieval tasks cross heterogeneous domains such as cross-language retrieval and cross-domain object recognition.
Although there are a few works about metric learning on heterogeneous domains,  some of them focus on learning metric only in the target domain\cite{qi2012transfer}, and only rare focus on learning metric(or similarity) across heterogeneous domains.
Kate et al.\cite{saenko2010adapting} proposed to learn a metric across different image domains, however their model is limited to the situation where the dimensions of both the source and the target domains are the same. 
Kulis et al.\cite{kulis2011you} broke this limit, however their model learns a similarity function which only calculates the cross-domain similarity.
Moreover, both of the above works just align the posteriors from this paper's perspective, thus do not exploit the unlabeled samples of the target domain, which generally could be very useful.

The formulation of our MLHD model consists of three parts:
the first two parts force the alignments of the priors and the posteriors between the two domains. 
The priors are aligned by minimizing a two-sample testing statistic, Maximum Mean Discrepancy(MMD), proposed by Gretton et al.\cite{gretton2008kernel}.
And the posteriors are aligned by keeping the samples in the same class close enough while in the different classes as far as possible.
We show that the two parts can be reparameterized with a single PSD matrix.
Then we introduce a LogDet regularizer over the PSD matrix as the third part to avoid the troublesome PSD constraint because it can be automatically satisfied\cite{davis2007information,kulis2006learning} by using Bregman Projection algorithm if the objective is LogDet function. 
Besides, to deal with the nonlinear situation, we also kernelized our model based on the kernelization works about LogDet function\cite{jain2010inductive,jain2009metric}.
Our paper provides the detailed proofs on kernelization.

The rest of the paper is organized as follows: Section \ref{sec:related} reviews some related works, and section \ref{sec:formulation} presents the MLHD model in detail. Then Section \ref{sec:experiments} verifies the effectiveness of MLHD experimentally. Finally section \ref{sec:conclusion} conclude this paper.

\section{Related Work}
\label{sec:related}
Learning among heterogeneous data sources has caught much researchers' attention. 
However, most of them focus on the classification or dimension reduction learning in the target domain with the help of the source domains. 
Only rare of them focus on the metric learning across the heterogeneous domains. 
Thus in this section, besides of the related works on metric learning across the heterogeneous domains, we also review some recent heterogeneous classification learning works at first.


Heterogeneous learning may date back to Dai et al.\cite{dai2008translated}.
They used some co-occurrence data to estimate the feature-level conditional distribution from source feature to target feature. 
Later, many other methods were proposed\cite{zhang2011multi,wang2009heterogeneous,wang2011heterogeneous,shi2010transfer,duan2012learning}. 
A common character of these methods is that they all map the samples in the source and the target domains into a common space for the learning tasks.
For example, Wang et al.\cite{wang2011heterogeneous} embedded all the samples in different domains into a common space according to a large manifold structure covering both the within-domain geometrical structure and the between-domain label structure.
Zhang et al.\cite{zhang2011multi} mapped all the samples into a common space and applied the classic linear discriminant analysis(LDA). 
Shi et al.\cite{shi2010transfer} used a collective matrix factorization model to find out the common space.
However, the algorithm requires the same number of samples of source and target domains, which is usually could not be satisfied. Thus before conducting the algorithm, they had to bring in a sampling procedure.
Duan et al.\cite{duan2012learning} constructed a parameterized augmented space as the common space motivated by a domain adaptation method proposed by Daume et al. did\cite{daume2007frustratingly}.
And the parameters are learned through optimizing a large margin classification model. 

Although learning among heterogeneous data sources has attracted much attention, works on metric learning across heterogeneous domains are relatively rare. 
Qi et al.\cite{qi2012transfer} focused on metric learning only for the target domain, but not that across the source and the target domains, thus concern different setting from ours.
To the best of our knowledge, Kulis et al.\cite{kulis2011you}'s work is the only one closest to ours, although what they learned is, strictly speaking, a similarity function rather than a metric across the source and the target domains.
They proposed a Frobenuis-norm regularized large margin model to learn the (linear) similarity function, which, from this paper's perspective, can be seen as only aligning the posteriors rather than the priors.
Thus, they don't explore the abundant available unlabeled samples in target domain to leverage the learning.

\section{Metric Learning across Heterogeneous Domains}
\label{sec:formulation}
In this section, we first present some notations, then give out the mathematical model. Next, we optimize the model with Bregman projection method. Finally, we show how to kernelize this model. 
\subsection{Problem Statement and Notations}
We first provide some notations used throughout this paper.
Assuming that we are given two domains: a $D^x$ dimensional labeled source domain $\mathcal{X}=\{ (x_i,l^x_i)|i=1,2,\cdots,N^x \}$, and a $D^y$ dimensional partially labeled target domain $\mathcal{Y}=\{(y_i,l^y_i)|i=1,2,\cdots,N^y_l\} \cup  \{y_i|i=N^y_l+1,\cdots,N^y_l+N^y_u\}$.
Let $N^y=N^y_l+N^y_u$.
For convenience, we also define two data matrix $X=[x_1,x_2,\cdots,x_{N^x}] \in \mathcal{R}^{D^x \times N^x}$ and $Y=[y_1,y_2,\cdots,y_{N^y}] \in \mathcal{R}^{D^y \times N^y}$.
Then two linear operators $W_x\in \mathcal{R}^{D^x \times D^c}$ and $W_y\in \mathcal{R}^{D^y \times D^c}$ are used to map the samples in the two domains into a $D^c$ dimensional common space. 
Specifically $x \rightarrow W_x^Tx$ and $y \rightarrow W_y^Ty$.
And the metric is defined as the 2-norm $d(x,y) = \|W_x^Tx - W_y^Ty\|_2$. 
Furthermore the squared metric can be rewritten into a matrix form as follows:
\begin{align*}
 d^2(x_i,y_j) &= \| W_x^Tx_i - W_y^Ty_j \|^2   \\
             &= [x_i^T ~~ -y_j^T]  
\left( 
\begin{array}{cc}
W_xW_x^T &  W_xW_y^T \\
W_yW_x^T &  W_yW_y^T \\
\end{array}
\right)
\left[
\begin{array}{c}
 x_i \\
 -y_j
\end{array}
\right]    \\
\end{align*}
If we let 
\begin{equation*}
M = \left( 
\begin{array}{cc}
W_xW_x^T &  W_xW_y^T \\
W_yW_x^T &  W_yW_y^T \\
\end{array}
\right)
\end{equation*}
and $z_{ij} = [x_i^T ~~ -y_j^T]^T$, then we have 
\begin{equation}
d^2_M(x_i,y_j) = z_{ij}^T M z_{ij}
\label{eq:metric}
\end{equation}
which is reparameterized only by matrix $M \in \mathcal{S}_+$, where $\mathcal{S}_+$ denotes the set containing all the symmetric positive semi-define matrices. 

The goal of metric learning across the heterogeneous domain is to learn the parameterized metric $d$ defined above by using the data in both the source domain $\mathcal{X}$ and the target domain $\mathcal{Y}$.

\subsection{Formulation}
In this subsection, we propose our  Metric Learning across Heterogeneous Domain(MLHD) model, which fully exploits both the labeled and the unlabeled samples in the two domains.
And to reach this goal, we force the model to align not only the posteriors but also the priors of the two domains.

Aligning the posteriors amounts to forcing the samples in the same class close enough while the samples in the different classes far away. 
And it is easy to achieve by imposing the following distance constraints:
\begin{align}
d^2(x_i,y_j) &\ge l  \quad  \text{if } l^x_i = l^y_j \\
d^2(x_i,y_j) &\le u  \quad  \text{if } l^x_i \neq l^y_j 
\end{align}

Aligning the priors is closely related to the statistical two-sample testing problem, which determines whether two random variables have the same distribution. 
So first of all, let us briefly introduce the method, proposed by Gretton et al.\cite{gretton2008kernel}, for the two-sample testing problem.
In that paper, the authors used a kernel method to judge the discrepancy between two random variables. 
And the proposed statistic, named Maximum Mean Discrepancy(MMD), calculates the distance between the means of the two random variables mapped into a Reproducing Kernel Hilbert Space.
Then the authors presented some critical statistical analytic results.
The first one is that if the kernel is universal\cite{micchelli2006universal}, then MMD=0 if and only if the two random variables are the same.
The authors also showed that the empirical MMD converges in probability at rate $1/\sqrt{\text{total number of the samples}}$.
In this paper, we align the priors of the two domains in the common space by minimizing the squared MMD statistic on the samples mapped in the common spaces. 
The formulation is as follows
\begin{equation}
\begin{array}{l}
MMD^2(\mathcal{X},\mathcal{Y}) =  \\
\left( \frac{1}{N_x^2}\sum_{i,j=1}^{N_x}k(W_x^Tx_i,W_x^Tx_j)  + \frac{1}{N_y^2}\sum_{i,j=1}^{N_y}k(W_y^Ty_i,W_y^Ty_j) -  \frac{2}{N_xN_y}\sum_{i,j=1}^{N_x,N_y}k(W_x^Tx_i,W_y^Ty_j)     \right)
\end{array}
\label{eq:mmdorg}
\end{equation}
where $k(\cdot,\cdot)$ is an universal kernel function, for example, Gaussian kernel. 
Unfortunately, minimizing the equation \ref{eq:mmdorg} with respect to $W_x$ and $W_y$ is difficult due to 1)  it is nonconvex and 2) $W_x$ and $W_y$ are embedded in kernel which is nonlinear.
Thus to make the issue tractable, instead we just use the linear kernel to make equation \ref{eq:mmdorg} convex and reparameterize it using matrix $M\in\mathcal{S}^+$. For this, let 
\begin{equation}
\overline{z} = \left[ \frac{1}{N_x}\sum_{i=1}^{N_x} x_i^T ~~ -\frac{1}{N_y}\sum_{j=1}^{N_y} y_j^T \right]^T
\end{equation}
The squared MMD now can be simplified to:
\begin{align}
MMD^2_M(\mathcal{X},\mathcal{Y})   =  \overline{z}^T M \overline{z}
\label{eq:mmd}
\end{align}

So far, the alignments of the priors and the posteriors both simply depend on the PSD matrix $M$, however such PSD constraint is relatively troublesome for optimization. 
Fortunately, LogDet-function regularized model can automatically keep the PSD property of the $M$ in the optimization process while still hold the convexity\cite{kulis2009low}.
Consequently we use the LogDet function to regularize the matrix $M$ as follows: 
\begin{equation}
LogDet(M,M_0) = tr(MM_0^{-1}) - \log \det(MM_0^{-1}) - dim(M)  
\end{equation}
where $tr(\cdot)$ is the trace operator, and $dim(\cdot)$ is the dimension function.

Now by fusing the LogDet regularizer,  we recast our MLHD model as follows
\begin{equation}
\begin{array}{rl}
\min_{M,t,\bm\xi} ~~ &LogDet(M,I) + \lambda_1 MMD^2_M(\mathcal{X},\mathcal{Y})  + \lambda_2 LogDet(\bm\xi,\bm\xi_0)\\
s.t.~~ &d^2_M(x_i,y_j) \ge \xi_{ij}  \quad  \text{if } l^x_i = l^y_j \\
       &d^2_M(x_i,y_j) \le \xi_{ij}  \quad  \text{if } l^x_i \neq l^y_j  \\
       & M \in \mathcal{S}_+
\end{array}
\label{eq:model}
\end{equation}
where $\bm\xi$ is a vector of slack variables, and $\bm\xi_0$ is an initializing vector whose components equal $u$ if corresponding to same-class constraints and $l$ if corresponding to different-class constraints.
$I$ is the identity matrix.
And $\lambda_1$ and $\lambda_2$ are two trade-off parameters. 
This is still a convex model.

\subsection{Optimization}
In this section, we use Bregman Projection algorithm\cite{censor1997parallel,bregman1967relaxation} to optimize our model. 
The algorithm cyclically projects the current solution onto a single constraint with  Bregman divergence, here the LogDet function. 
To facilitate the projection, we first relax the equation \ref{eq:model} and make its objective function only contain LogDet function as follows:
\begin{equation}
\begin{array}{rl}
\min_{M,t,\bm\xi} ~~ &LogDet(M,I) + \lambda_1 LogDet(t,t_0)  + \lambda_2 LogDet(\bm\xi,\bm\xi_0)\\
s.t.~~ &d^2_M(x_i,y_j) \ge \xi_{ij}  \quad  \text{if } l^x_i = l^y_j \\
       &d^2_M(x_i,y_j) \le \xi_{ij}  \quad  \text{if } l^x_i \neq l^y_j  \\
       &MMD^2_M(\mathcal{X},\mathcal{Y}) \le t \\
       & M \in \mathcal{S}_+
\end{array}
\label{eq:relaxion}
\end{equation}
where $t_0$ is small positive number as the initialization of $t$. 
Note that $t \ge 0$ is implied by constraint $MMD^2(\mathcal{X},\mathcal{Y}) \le t$,  thus $t$ can be placed in the LogDet function. 

Then we present the optimization method described in algorithm \ref{alg:algorithm}. 
The algorithm also cyclically projects the current solution onto a single linear constraint with LogDet function, consequently these projections can be analytically solved. 
Due to the LogDet function's property that it is only defined over PSD matrix set, the projected result is still restricted in $\mathcal{S}^+$. 
In fact, similar methods are also used in \cite{davis2007information,kulis2006learning}. 
\begin{algorithm}
\begin{algorithmic}
  \STATE \textbf{Input:} Source domain $X$, target domain $Y$, parameter $\lambda_1$,$\lambda_2$
  \STATE \textbf{Initialize:} primer variables $M=I$,$\bm\xi=\bm\xi_0$,$t=t_0$, dual variables $\beta=0$,$\zeta=0$
  \WHILE { $n < MaxIter$ }
  \STATE -------- Bregman Projection on distance constraints ------------------------
     \FOR {Each distance constraint}
     \STATE 1: Solving the following problem by Lagrangian method, and getting its Lagrangian multiplier $\alpha$
           \begin{align*}
             \min_{M,\xi_{ij}} ~~ &  LogDet(M,M^n_{ij}) + \lambda_1 LogDet(\xi_{ij},\xi_{ij}^n) \\
             s.t.      ~~ &  z_{ij}^T M z_{ij} = \xi_{ij}
           \end{align*}         
     \STATE 2: Update primer variables $M,\xi_{ij}$ and dual variable $\beta_{ij}$  \\
          \center{If $l^x_i=l^y_j$, then $\delta=1$, else $\delta=-1$.}
          \begin{align*}
            &p = z_{ij}^T M_{ij}^n z_{ij} \\
            &\alpha = \min(\beta_{ij},\frac{\delta\lambda_2}{1+\lambda_2} ( 1/p - 1/\xi_{ij}^n))\\
            &\beta_{ij} = \beta_{ij} - \alpha \\
            &\xi_{ij} = \lambda_2 \xi_{ij}^n  / (\lambda_2 + \delta \alpha \xi_{ij}^n) \\
            &M = M_{ij}^n + \frac{\lambda_2\xi_{ij}^n}{\lambda_2+\delta \alpha \xi_{ij}^n} M_{ij}^n z_{ij}z_{ij}^TM_{ij}^n
          \end{align*}
     \ENDFOR
  \STATE -------- Bregman Projection on MMD constraint ------------------------------
     \STATE 1: Solving the following problem by Lagrangian method, and getting its Lagrangian multiplier $\eta$
           \begin{align*}
             \min_{M,t} ~~ &  LogDet(M,M^n) + \lambda_1 LogDet(t,t^n) \\
             s.t.      ~~ &  \overline{z}^T M \overline{z} = t 
           \end{align*}
     \STATE 2: Update primer variables $M,t$ and dual variable $\zeta$
           \begin{align*}
	     &\eta = - min(\zeta, -\eta)  \\
             &\zeta = \zeta + min(\zeta,-\eta)  \\  
             &M = M^n -  \frac{1}{\overline{z}^T M^n \overline{z} - \eta}  M^n \overline{z} \overline{z}^T M^n \\
             &t = \frac{t^n \lambda_1}{t^n \eta + \lambda_1}
           \end{align*} 
  \ENDWHILE
\end{algorithmic}
\caption{Optimization algorithm for MLHD model}
\label{alg:algorithm}
\end{algorithm}

\subsection{Kernelization}
The MLHD model established in equation \ref{eq:model} is linear, thus it is inappropriate for nonlinear circumstances.
As a widely accepted solution, kernel method, through nonlinearly mapping the original samples into a high-dimensional space and conducting learning in that space, can conveniently convert a linear model into a nonlinear model\cite{smolalearning}.  
In this subsection, we present how to kernelize the above linear model.
We first introduce some notations. 
Denote $Q$ by
\begin{equation}
Q = \left[  \begin{array}{cc}
            X &  \\
	    & Y  \\
	    \end{array}  \right]   \in \mathcal{R}^{(D_x+D_y)\times(N_x+N_y)} 
\end{equation}
Denote the kernel function defined on $\mathcal{X},\mathcal{Y}$ by $k_x(\cdot,\cdot)$ and $k_y(\cdot,\cdot)$ respectively.
Let $K_x,K_y$ be the kernel matrix with the $(i,j)^{th}$ entry be $k_x(x_i,x_j)$ and $k_y(y_i,y_j)$ respectively and the kernel matrix $K$ be 
\begin{equation}
K = \left[  \begin{array}{cc}
            K_X &  \\
	    & K_Y  \\
	    \end{array}  \right]  \in \mathcal{R}^{(N_x+N_y)\times(N_x+N_y)} 
\end{equation}
Let $e_{ij} = [ e_i^T ~~ -e_j^T ]^T$ where $e_k$ is a vector with only the $i$th entry being 1.
Then the squared metric in equation \ref{eq:metric} can be cast as $d^2(x_i,y_j) = e_{ij}^TQ^TMQe_{ij}$.
Let $\overline{e} = [ 1_{N_x}/N_x ~~ -1_{N_y}/N_y ]$ where $1_N$ is the $N$ dimensional vector with all entries be 1.
Then the squared MMD in equation \ref{eq:mmd} can be rewritten as $MMD^2(\mathcal{X},\mathcal{Y}) = \overline{e}^TQ^TMQ\overline{e}$.

We follows the idea in \cite{kulis2011you} to kernelize our MLHD model. 
Specifically, we first show that the range space of the matrix parameter $M$ in equation \ref{eq:model} is in the range space of $Q$, then derive an equivalent optimization problem which only depends on the inner product defined in the source and the target domains. 
Finally, the inner product can be substituted with any kernel function. 
The concrete kernelization is summarized in the following theorems \ref{theo:range} and \ref{theo:equivalent}.
Although our kernelization looks like that in \cite{kulis2011you}, there are some difference: 1) our model focuses on metric thus its parameter matrix $M$ is PSD matrix while the parameter matrix in \cite{kulis2011you} is asymmetric rectangle matrix, 2) the regularizer used in our model is LogDet function while is Frobenius norm in \cite{kulis2011you}.

In the following, we shown in theorem \ref{theo:range} that the range space of the matrix parameter $M$ in equation \ref{eq:model} is in the range space of $Q$.
\begin{theorem}
There exists an  $N_x+N_y$ dimensional matrix $L \in \mathcal{S}_+$ such that the optimal solution $M^\star$ to \ref{eq:model} is of the form as follows
\begin{equation}
M^\star = QK^{-1/2}LK^{-1/2}Q^T
\end{equation}
\label{theo:range}
\end{theorem}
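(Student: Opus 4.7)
The plan is a representer-theorem style argument exploiting that every data vector entering any constraint already lies in the column span of $Q$. Explicitly, $z_{ij}=Qe_{ij}$ and $\overline z=Q\overline e$, so each distance constraint reads $e_{ij}^TQ^TMQe_{ij}$ and the MMD constraint reads $\overline e^TQ^TMQ\overline e$; both depend on $M$ only through the $(N_x+N_y)\times(N_x+N_y)$ block $Q^TMQ$. This separates $M$ into a constrained part acting on $\mathrm{range}(Q)$ and an unconstrained part acting on $\mathrm{range}(Q)^\perp$, so the strategy is to optimise the LogDet regulariser over the latter while exhibiting the former in the claimed form.

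For the parameterisation I would pick the isometry $U:=QK^{-1/2}$ of $\mathcal{R}^{N_x+N_y}$ onto $\mathrm{range}(Q)$, well-defined because $U^TU=K^{-1/2}Q^TQK^{-1/2}=K^{-1/2}KK^{-1/2}=I$ using $Q^TQ=K$ for the linear kernel. Completing $U$ to an orthonormal basis $[U\;V]$ of $\mathcal{R}^{D_x+D_y}$, write
\[
M=[U\;V]\begin{pmatrix}A & B\\ B^T & C\end{pmatrix}[U\;V]^T.
\]
Using $Q^TU=K^{1/2}$ and $Q^TV=0$, a short calculation gives $Q^TMQ=K^{1/2}AK^{1/2}$, so feasibility pins down only the block $A$ and leaves $B,C$ entirely free for the regulariser.

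Next I would optimise out $B$ and $C$ in $\mathrm{LogDet}(M,I)=\mathrm{tr}(M)-\log\det M-(D_x+D_y)$. The block-determinant identity $\det M=\det C\cdot\det(A-BC^{-1}B^T)$, combined with $BC^{-1}B^T\succeq 0$ and monotonicity of $\log\det$ in the PSD order, makes $B=0$ optimal for every fixed $A,C$; the residual problem $\min_{C\succ 0}\{\mathrm{tr}(C)-\log\det C\}$ is then solved uniquely at $C=I$. Plugging back, the optimum takes the form $M^\star=UA^\star U^T+VV^T$, whose data-coupled component $UA^\star U^T=QK^{-1/2}A^\star K^{-1/2}Q^T$ is exactly what the theorem asserts, with $L:=A^\star\succeq 0$ because $A^\star=U^TM^\star U$.

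The delicate point I expect to be the main obstacle is the $VV^T$ component supported on $\mathrm{range}(Q)^\perp$: the LogDet regulariser pushes this towards the identity rather than zero, yet it contributes nothing to any distance or MMD evaluation since $V^TQ=0$, so the theorem has to be read as identifying $M^\star$ up to this metric-irrelevant orthogonal piece (equivalently, by restricting LogDet to $\mathrm{range}(Q)$). A slicker alternative that avoids the block decomposition is to use the KKT conditions of \ref{eq:model}: stationarity yields $M^{-1}=I+Q\bigl(\sum_k\lambda_k f_kf_k^T\bigr)Q^T$ with $f_k\in\{e_{ij},\overline e\}$, and a single Sherman--Morrison--Woodbury step expresses $M$ as the identity plus a correction supported on $\mathrm{range}(Q)$, again giving the stated form after discarding the identity piece outside $\mathrm{range}(Q)$.
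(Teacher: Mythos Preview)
Your approach is the same representer-theorem strategy as the paper's, but executed more carefully. The paper also splits $M$ along $\mathrm{range}(Q)$ and its orthogonal complement, writing $M=Q\tilde L Q^T+Q_\perp\tilde L'Q_\perp^T$, notes that only the first block enters the constraints, and then argues via the eigenvalue inequality $\sigma_i(Q\tilde L Q^T)\le\sigma_i(M)$ that the LogDet objective is minimised by discarding the orthogonal block entirely, after which the substitution $L=K^{1/2}\tilde L K^{1/2}$ gives the stated form. Your version instead keeps the full block decomposition in the orthonormal frame $[U\;V]$ and optimises the free blocks $B,C$ explicitly, obtaining $B=0$ and $C=I$ rather than $C=0$. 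You are right to flag this as the delicate point: setting the orthogonal block to zero would make $M$ singular and send $-\log\det M$ to $+\infty$, so the paper's eigenvalue-monotonicity step as written is not quite sound (the map $\sigma\mapsto\sigma-\log\sigma$ is not monotone on $(0,\infty)$). Both arguments agree on what matters for the sequel, namely that the data-coupled part of $M^\star$ lies in $\mathrm{range}(Q)$ and can be written as $QK^{-1/2}LK^{-1/2}Q^T$; your explicit acknowledgement that a metric-irrelevant $VV^T$ piece remains, and your KKT/Woodbury alternative, are cleaner ways to justify the representation actually used downstream in Theorem~\ref{theo:equivalent}.
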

\begin{proof}
Apparently, $M\in\mathcal{S}_+$, since LogDet only defines on $\mathcal{S}_+$.
Let $Q_\perp$ consists of the basis vectors spanning the null space of $Q^T$, i.e., $Q^TQ_\perp=0$. 
Then $M$ can be decomposed into two parts as follows
\begin{equation} 
M=Q\tilde{L}Q^T + Q_\perp \tilde{L} Q_\perp^T
\end{equation}
where $\tilde{L}$ is some PSD matrix.
It is easy to show that the second term $Q_\perp \tilde{L} Q_\perp^T$ has no influence on $d^2(x_i,y_j)$ and $MMD^2(\mathcal{X},\mathcal{Y})$. 
Consequently the only term in equation \ref{eq:model} influenced by the second term of $Q$ is the LogDet term.
Fortunately, the LogDet term is only determined by the eigenvalues of $M$ 
\begin{equation}
LogDet(M,I) = \sum_i \sigma_i(M) + \sum_i \log \sigma_i(M) - dim(M)
\end{equation}
where $\sigma_i(\cdot)$ is the $i$th largest eigenvalue.
And according to matrix perturbation theory\cite{stewart1990matrix}, $\sigma_i(Q\tilde{L}Q^T) \le \sigma_i(Q\tilde{L}Q^T + Q_\perp \tilde{L} Q_\perp^T)$ when both $Q\tilde{L}Q^T$ and $Q_\perp \tilde{L} Q_\perp^T$ are PSD matrices. 
Thus, to minimize the objective in equation \ref{eq:model}, we should discard $Q_\perp \tilde{L} Q_\perp^T$ term and let $M^\star=Q\tilde{L}Q^T$. 
Finally by transforming $L=K^{1/2}\tilde{L}K^{1/2}$, we can write $M^\star = QK^{-1/2}LK^{-1/2}Q^T$. \qed
\end{proof}

Then based on the above theorem \ref{theo:range}, we show in the following theorem \ref{theo:equivalent} that an equivalent optimization problem can be derived and only involves the inner products defined in the source and the target domains.

\begin{theorem}
If $L^\star$ is the optimal solution to the following problem:
\begin{equation}
\begin{array}{rl}
  \min_{L,t,\bm\xi} ~~ & LogDet(L,I) +  \lambda_1 LogDet(t,t^0) + \lambda_2 LogDet(\bm\xi,\bm\xi^0) \\
  s.t.           ~~ & e_{ij}^T K^{1/2} L K^{1/2} e_{ij} \ge \xi_{ij}  \quad  \text{if } l^x_i = l^y_j \\
                    & e_{ij}^T K^{1/2} L K^{1/2} e_{ij} \le \xi_{ij}  \quad  \text{if } l^x_i \neq l^y_j \\
                    & \overline{e}^TK^{1/2} L K^{1/2} \overline{e} \le t \\
\end{array}
\label{eq:L}
\end{equation}
then $M^\star = QK^{-1/2}L^\star K^{-1/2}Q^T$.
\label{theo:equivalent}
\end{theorem}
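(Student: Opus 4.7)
The plan is to parameterize $M$ using the representation from Theorem \ref{theo:range}, substitute into the original problem \eqref{eq:model}, and check that each term collapses into the corresponding term of \eqref{eq:L}. Thus the two problems have the same optimum up to the stated change of variables, and we recover $M^\star = QK^{-1/2}L^\star K^{-1/2}Q^T$ by uniqueness of the strictly convex LogDet minimizer.

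First I would establish the identity $Q^TQ = K$. Because $Q$ is block-diagonal in $X$ and $Y$, so is $Q^TQ$, with blocks $X^TX$ and $Y^TY$; under a linear kernel these are exactly $K_X$ and $K_Y$, and the same structural identity is what licenses the eventual substitution of any valid kernel $k_x,k_y$ for the inner products. With this identity in hand, substituting $M = QK^{-1/2}LK^{-1/2}Q^T$ into the distance constraints yields
\begin{equation*}
d^2_M(x_i,y_j) = e_{ij}^T Q^T M Q e_{ij} = e_{ij}^T K\,K^{-1/2}LK^{-1/2}\,K\, e_{ij} = e_{ij}^T K^{1/2} L K^{1/2} e_{ij},
\end{equation*}
and similarly $MMD^2_M(\mathcal{X},\mathcal{Y}) = \overline{e}^T K^{1/2} L K^{1/2} \overline{e}$. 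These are exactly the left-hand sides appearing in \eqref{eq:L}, so the three families of constraints in \eqref{eq:model} match those in \eqref{eq:L} term-by-term.

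The most delicate step, and the one I expect to be the main obstacle, is showing that $LogDet(M,I)$ and $LogDet(L,I)$ are equivalent objectives under the substitution. The key tool is the standard fact that $AB$ and $BA$ share the same nonzero spectrum: applying this to $A = QK^{-1/2}$ and $B = LK^{-1/2}Q^T$ shows that the nonzero eigenvalues of $M$ coincide with those of $K^{-1/2}Q^TQK^{-1/2}L = L$. Hence $\mathrm{tr}(M) = \mathrm{tr}(L)$ and the product of the nonzero eigenvalues of $M$ equals $\det(L)$; the remaining $(D_x+D_y) - (N_x+N_y)$ eigenvalues of $M$ are forced to zero by the rank of $Q$ and thus contribute a constant (independent of $L$) to the Bregman-divergence form of LogDet restricted to the range of $Q$. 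Consequently minimizing $LogDet(M,I)$ over $M$ of the prescribed form is equivalent to minimizing $LogDet(L,I)$ over PSD $L$.

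Putting the three pieces together, problem \eqref{eq:model}, when restricted to the admissible $M$ identified in Theorem \ref{theo:range}, is identical to \eqref{eq:L} up to an additive constant in the objective. Because the LogDet-regularized objective is strictly convex in its effective parameter, the minimizer is unique, so the optimal $L^\star$ of \eqref{eq:L} must correspond to the optimal $M^\star$ of \eqref{eq:model} via $M^\star = QK^{-1/2}L^\star K^{-1/2}Q^T$. Finally, observing that the resulting problem \eqref{eq:L} depends on $Q$ only through $K$, we conclude that replacing the Gram matrices $X^TX, Y^TY$ by kernel evaluations yields the fully kernelized MLHD formulation.
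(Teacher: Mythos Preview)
Your proposal is correct and follows essentially the same route as the paper: substitute $M = QK^{-1/2}LK^{-1/2}Q^T$, use $Q^TQ = K$ to verify that the distance and MMD constraints become exactly those of \eqref{eq:L}, and argue that the LogDet objectives differ only by an additive constant via a spectral equivalence. The only cosmetic difference is that the paper obtains $\sigma_i(M)=\sigma_i(L)$ by observing that $QK^{-1/2}$ has orthonormal columns (so $(QK^{-1/2}U)\Sigma(QK^{-1/2}U)^T$ is an eigendecomposition of $M$), whereas you reach the same conclusion through the $AB$/$BA$ nonzero-spectrum identity.
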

\begin{proof}
Note that 
\begin{equation}
QK^{-1/2} = 
\left[ 
\begin{array}{cc}
XK_X^{-1/2} &   \\ 
& YK_Y^{-1/2}  \\
\end{array}
\right]
\end{equation}
is an orthogonal matrix because both $XK_X^{-1/2}$ and $YK_Y^{-1/2}$ are orthogonal matrices. 
Let the eigen-decomposition of $L$ be $L=U\Sigma U^T$. 
Then $M=(QK^{-1/2}U) \Sigma (U^TK^{-1/2}Q^T)$, which is the eigen-decomposition of $M$.
Consequently $\sigma_i(M) = \sigma_i(L)$, which means 
\begin{equation}
LogDet(M,I) = LogDet(L,I) + const 
\label{eq:L1}
\end{equation}

Also by substituting $M=QK^{-1/2}L K^{-1/2}Q^T$ into equations \ref{eq:metric} and \ref{eq:mmd}, we have 
\begin{equation}
d^2(x_i,y_j) = e_{ij}^T K^{1/2} L K^{1/2} e_{ij}
\label{eq:L2}
\end{equation}
\begin{equation}
MMD^2(\mathcal{X},\mathcal{Y}) = \overline{e}^TK^{1/2} L K^{1/2} \overline{e}
\label{eq:L3}
\end{equation}
By rewriting the equation \ref{eq:model} using equations \ref{eq:L1}, \ref{eq:L2} and \ref{eq:L3}, we have the equivalent optimization problem \ref{eq:L}, and also have $M^\star = QK^{-1/2}L^\star K^{-1/2}Q^T$. \qed
\end{proof}

\newpage
\section{Experiments}
\label{sec:experiments}
This section verifies the MLHD model experimentally through the comparison with some relevant baselines.
We first visually demonstrate the idea of the MLHD model under two-dimensional source domain and three-dimensional target domain. 
Then we conduct two experiments on the cross-language retrieval task and the heterogeneous domain object recognition task respectively. 

\subsection{Baseline Methods}

\begin{itemize}
\item KCCA+NN \\
Because of the different dimensional spaces where the source and the target domains lie, nearest neighbor(NN) classifier cannot be applied directly. 
Consequently, we follow the methods used by Kulis et al\cite{kulis2011you}. Specifically, we first apply kernel canonical correlation analysis(KCCA) to project the samples in the two domains into a common space, then run the NN classifier. 
\item KCCA+ITML \\
Using information theory metric learning(ITML) algorithm\cite{davis2007information} to adapt the discrepancy between different domains is proposed by Saenko et al.\cite{saenko2010adapting}
However ITML cannot work in different dimensional spaces, thus KCCA is first applied.
This baseline is also from Kulis et al.'s paper\cite{kulis2011you}.
\item Asymmetric Regularized Cross-domain transformation(ARC) \\
This method was introduced by Kulis et al.\cite{kulis2011you}.
It learns a linear asymmetric transformation to compute the cross-domain similarity score, and its mathematical formulation is as follows
\begin{equation}
\min_{W} ~~ \|W\|_F^2 + \lambda \left(  \sum_{l^x_i=l^y_j} \max(0,l-x_i^TWy_j)^2  + \sum_{l^x_i\neq l^y_j} \max(0,x_i^TWy_j-u)^2 \right)
\end{equation}
The differences between ARC and MLHD are that, 1) it learns the similarity instead of the metric, 2) it does not consider the alignment of priors, thus fails to exploit the abundant unlabeled samples.
\end{itemize}

\subsection{Toy Problem} 
To demonstrate the benefit of respectively aligning both priors and posteriors of the two domains, we construct a two-dimensional source domain and a three-dimensional target domain, as depicted in figure \ref{fig:demo_data}. 
Both of them have two classes. 
And in the source domain, each class has 40 labeled samples randomly drawn from two Gaussian distributions.
While in the target domain, to demonstrate the efficacy of aligning prior, each class has 40 unlabeled samples and 2 labeled samples, deliberately sampled with bias. 
\begin{figure}[H]
\centering
\includegraphics[scale=0.4]{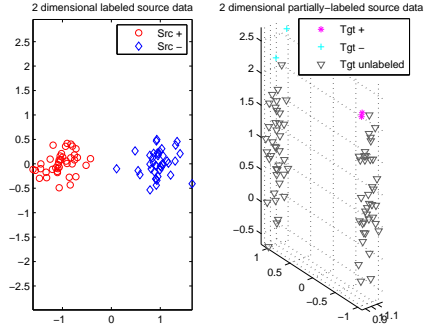}
\caption{Source and target domains}
\label{fig:demo_data}
\end{figure}
Figure \ref{fig:demo_results} shows the results of the three baselines and our algorithm. 
Clearly, CCA does not map well into common space. 
The samples in each class from both domains lie mainly in a strip, and the unlabeled samples totally are separated from the labeled ones. 
Predictably, the nearest neighbor classifier will report a bad accuracy under this situation. 
ITML makes the situation better, but still fails to align the two domains well. 
The unlabeled samples are still separated from the labeled ones.
ARC only utilizes the labeled samples of the two domains. 
Although the two classes, of either the source or the target domain, are separated well, the distributions of the source and the target domains are obviously very different. 
On the contrary, besides aligning the posteriors, MLHD explicitly forces the prior distributions to be aligned. 
And the figure shows that the two classes, of both the source and the target domains, align withtogether. 
Note that, the classes of the target domain roughly concentrate in the center of the corresponding classes of the source domain. 
They does not align evenly, because in MLHD model, linear kernel is used rather than the required universal kernel in the MMD term for the optimization convenience. 
As a results, only the means of the priors are aligned, not the priors themselves. 

\begin{figure}[H]
\centering
\includegraphics[scale=0.4]{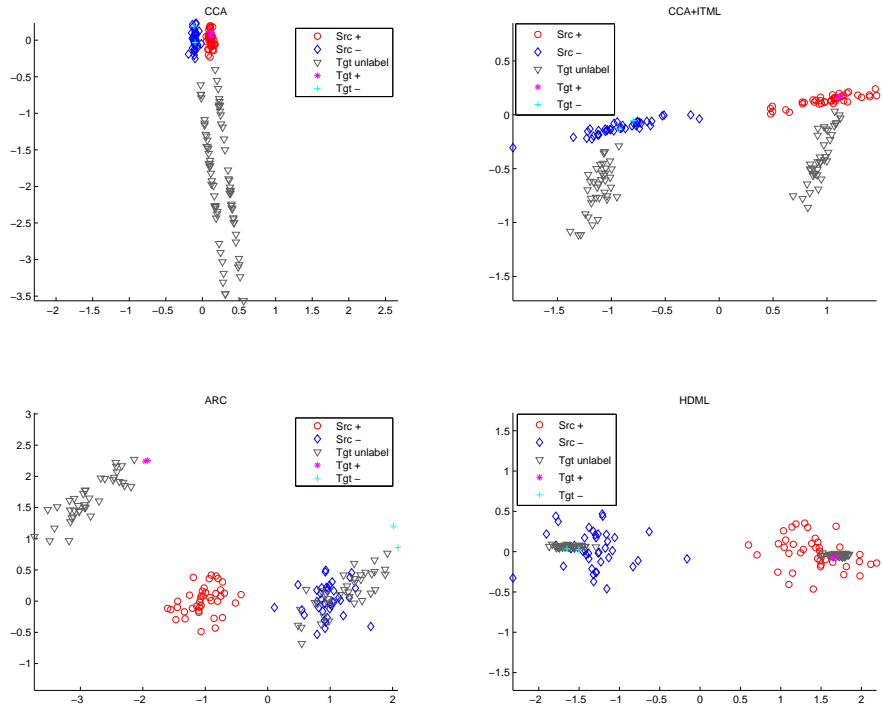}
\caption{Visualization on the toy problem}
\label{fig:demo_results}
\end{figure}

\subsection{Experiments on multilingual Reuters dataset}
The dataset of multilingual Reuters is collected by Amini et al.\cite{amini2010learning}\footnote{http://multilingreuters.iit.nrc.ca/ReutersMultiLingualMultiView.htm}. 
It contains 6 large Reuters categories(CCAT, C15, ECAT, E21, GCAT and M11) extracted from 5 different languages(English, French, German, Italian and Spanish) collections.
Each document has been preprocessed and indexed using a standard preprocessing chain including removal of stopwords and low-frequency words, then is represented by TFIDF features. 
For convenience of computation, PCA is first applied to reduce the dimension of the source domain to 100, and the target domain to 150.

Twenty groups are constructed by picking any two languages as a group, one for source domain and the other for target domain.
For each group, ten trials of experiments are carried out, and the average accuracy as well as the standard deviation are reported.
And for each trial, 20 labeled samples per class are randomly chosen from the source domain, 20 unlabeled plus 1 labeled samples per class are randomly chosen from the target domain as the training set, and 300 samples are randomly chosen from the target domain as the testing set. 
In all of the trials, the RBF kernel is used. 

From the comparative results shown in table \ref{tab:multilang},we can see that the baseline KCCA+NN, which simply adopts the Euclidean distance, yields the poorest performance. 
The baseline KCCA+ITML is much better comparing with KCCA+NN, and implies that the common space produced by KCCA is not quite suitable.
ARC algorithm does not require that the samples in the two domains are in the space with the same dimensionality, thus does not need a preprocessing of KCCA for dimension reduction. 
And the results of ARC are generally better than those of KCCA+ITML. 
Compared with ARC, MLHD further aligns the priors, and outperforms the ARC overall, which confirms the benefit of aligning the priors. 
\begin{table}[H]
  \begin{center}
    \begin{tabular}{|l||r|r|r|l|}
      \hline
      Src-Tgt  &  KCCA+NN  &       KCCA+ITML  &            ARC  &  MLHD            \\
      \hline
      EN-FR    &    19.3$\pm$4.5  &  \textbf{46.4$\pm$3.7}  &           45.8$\pm$6.0  &  \textbf{46.6$\pm$4.0}  \\
      EN-GR    &    18.3$\pm$3.0  &           42.6$\pm$6.5  &           43.5$\pm$7.0  &  \textbf{45.0$\pm$4.0}  \\
      EN-IT    &    17.6$\pm$1.2  &           38.5$\pm$7.4  &  \textbf{40.0$\pm$6.0}  &  \textbf{40.0$\pm$4.9}  \\
      EN-SP    &    22.3$\pm$5.6  &           41.6$\pm$7.4  &           42.5$\pm$7.5  &  \textbf{44.1$\pm$5.6}  \\
      \hline
      FR-EN    &    20.1$\pm$3.5  &           36.6$\pm$8.3  &           45.1$\pm$6.8  &  \textbf{46.3$\pm$5.4}  \\
      FR-GR    &    17.6$\pm$2.1  &           36.3$\pm$8.1  &  \textbf{40.3$\pm$7.3}  &  \textbf{40.0$\pm$7.8}  \\
      FR-IT    &    18.2$\pm$2.4  &           32.5$\pm$6.3  &  \textbf{36.6$\pm$6.3}  &  35.8$\pm$6.1           \\
      FR-SP    &    17.8$\pm$2.7  &           34.7$\pm$6.6  &           37.6$\pm$8.3  &  \textbf{39.1$\pm$8.1}  \\
      \hline
      GR-EN    &    18.5$\pm$4.4  &           34.3$\pm$7.3  &           36.0$\pm$3.9  &  \textbf{38.2$\pm$4.9}  \\
      GR-FR    &    17.5$\pm$1.1  &           41.7$\pm$8.6  &  \textbf{44.1$\pm$6.9}  &  43.1$\pm$6.7           \\
      GR-IT    &    18.3$\pm$3.0  &           33.7$\pm$7.9  &           36.6$\pm$5.7  &  \textbf{37.7$\pm$6.7}  \\
      GR-SP    &    19.7$\pm$2.9  &           42.9$\pm$8.0  &           43.5$\pm$7.3  &  \textbf{45.7$\pm$6.4}  \\
      \hline
      IT-EN    &    19.0$\pm$4.4  &           36.3$\pm$6.9  &           39.8$\pm$3.8  &  \textbf{42.4$\pm$4.8}  \\
      IT-FR    &    18.7$\pm$3.0  &           37.6$\pm$8.3  &           41.2$\pm$7.0  &  \textbf{44.6$\pm$6.6}  \\
      IT-GR    &    20.5$\pm$7.2  &           39.9$\pm$8.8  &           43.0$\pm$8.2  &  \textbf{45.6$\pm$9.3}  \\
      IT-SP    &    17.2$\pm$0.6  &           36.1$\pm$7.8  &           37.6$\pm$6.2  &  \textbf{41.6$\pm$8.3}  \\
      \hline
      SP-EN    &    18.4$\pm$2.4  &           35.1$\pm$6.7  &  \textbf{42.3$\pm$8.4}  &  \textbf{42.8$\pm$9.3}  \\
      SP-FR    &    17.7$\pm$1.4  &           41.8$\pm$9.6  &           43.3$\pm$4.7  &  \textbf{44.8$\pm$6.2}  \\
      SP-GR    &    18.3$\pm$2.9  &           35.6$\pm$8.8  &  \textbf{43.9$\pm$8.2}  &  \textbf{43.9$\pm$7.6}  \\
      SP-IT    &    17.8$\pm$1.5  &           37.4$\pm$3.4  &           38.3$\pm$4.5  &  \textbf{40.0$\pm$6.4}  \\
      \hline
    \end{tabular}
  \end{center}
  \caption{Accuracy results of cross-language retrieval. The best performances are highlighted.}
  \label{tab:multilang}
\end{table}

The MLHD model has two important trade-off parameters: $\lambda_1$ for weighting the MMD term, and $\lambda_2$ for weighting the distance constraint term. 
To study how these parameters influence the performance of MLHD, we run the experiments with $\lambda_1$ taken from $[10^{-3} ~10^{-2} ~10^{-1} ~10^0 ~10^1 ~10^2]$ and $\lambda_2$ taken from $[10^{-2} ~10^{-1} ~10^0 ~10^1 ~10^2 ~10^3]$. 
The results are demonstrated in figure \ref{fig:param_multiling}. 
From this figure, we can observe that in general the MLHD is not very sensitive to the parameter configuration, especially to $\lambda_2$. 
Moreover, the accuracy increases as $\lambda_1$ increases. 
This also verifies the usefulness of aligning priors.

\begin{figure}[H]
\centering
\includegraphics[scale=0.4]{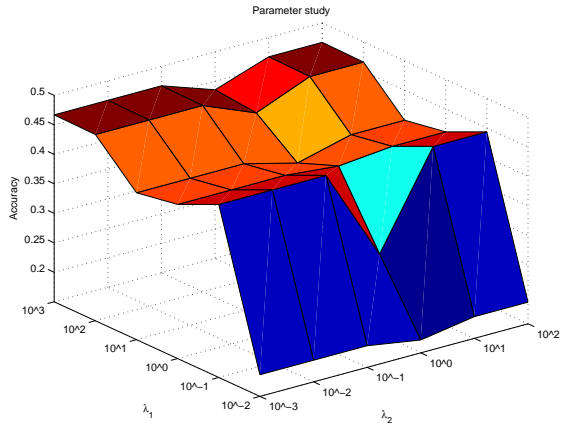}
\caption{Parameter study on Multilingual Reuters dataset}
\label{fig:param_multiling}
\end{figure}

\subsection{Object Recognition Experiments}
In this subsection, we use the dataset provided by Kate et al.\cite{saenko2010adapting}. 
This dataset contains 3 image domains: Amazon(images collected from Amazon.com), DSLR(high-resolution images taken from a digital DLR camera) and Webcam(low-resolution images taken from a web camera).
Among them, images in Amazon domains are in a canonical pose with uniform background, and images in both DSLR and Webcam domains are taken with varying poses and backgrounds.
Thus, in this experiments, we use DSLR and Webcam as the target domain separately. 
We follows the previous Kulis et al.\cite{kulis2011you}'s setting to extract image features. 
Specifically, all the images are first resized to 300x300 resolution. 
Then for each domains, three types of features are respectively extracted:
\begin{itemize}
\item \textbf{SURF600} SURF\cite{bay2006surf} features are extracted and clustered into a 600 visual words. Then each image is represented by a 600 dimensional histogram. 
\item \textbf{SURF800} Same processing as SURF600 besides clustering into a 800 visual words.
\item \textbf{SIFT900} SIFT\cite{lowe1999object} features are extracted and clustered into a 900 visual words. Then each image is represented by a 900 dimensional histogram.
\end{itemize}
We use the images with SURF600 and SIFT900 features as the source domain respectively and construct 16 groups of experiments in table \ref{tab:objrec}.
The experiment settings are almost the same as those in the above experiment on multilingual dataset. 
Specifically, ten trials are run for each group. 
And for each trial, 20 labeled samples per class are randomly chosen from the source domain. 
In the target domain, 10 unlabeled plus 1 labeled samples per class are randomly chosen as training set and the rest constitute the test set. 
We don't use many unlabeled samples in the target domain due to the limited number of collected images in DSLR and Webcam domains. 
In all of the trials, the RBF kernel is again used. 

The experimental results are listed in table \ref{tab:objrec}
\begin{table}[H]
  \begin{center}
    \begin{tabular}{|l||r|r|l|l|}
      \hline
      Src-Tgt                      &    CCA+NN  &  CCA+ITML  &  ARC                &  MLHD               \\
      \hline
      AmazonSurf600-WebcamSift900  &  15.7$\pm$2.6  &  29.7$\pm$2.6  &  30.1$\pm$3.2           &  \textbf{31.1$\pm$2.9}  \\
      AmazonSurf600-WebcamSurf800  &  19.0$\pm$3.9  &  29.0$\pm$3.1  &  \textbf{32.1$\pm$2.0}  &  30.6$\pm$2.6           \\
      AmazonSift900-WebcamSurf600  &  19.8$\pm$2.7  &  29.9$\pm$2.0  &  \textbf{31.9$\pm$2.8}  &  30.6$\pm$3.0           \\
      AmazonSift900-WebcamSurf800  &  18.7$\pm$4.4  &  30.1$\pm$4.6  &  \textbf{31.8$\pm$2.8}  &  \textbf{31.4$\pm$3.5}  \\
      DslrSurf600-WebcamSift900    &  16.1$\pm$1.9  &  29.1$\pm$1.8  &  29.8$\pm$1.9           &  \textbf{31.8$\pm$2.7}  \\
      DslrSurf600-WebcamSurf800    &  14.6$\pm$3.2  &  28.4$\pm$3.9  &  \textbf{31.4$\pm$3.2}  &  \textbf{31.6$\pm$2.7}  \\
      DslrSift900-WebcamSurf600    &  13.1$\pm$2.4  &  27.4$\pm$3.7  &  \textbf{29.9$\pm$1.9}  &  \textbf{29.4$\pm$1.3}  \\
      DslrSift900-WebcamSurf800    &  14.5$\pm$3.4  &  25.8$\pm$4.4  &  \textbf{29.3$\pm$2.5}  &  28.1$\pm$2.6           \\
      \hline
      AmazonSurf600-DslrSift900    &  20.6$\pm$5.6  &  28.8$\pm$5.1  &  27.8$\pm$4.5           &  \textbf{29.3$\pm$4.1}  \\
      AmazonSurf600-DslrSurf800    &  16.0$\pm$4.4  &  25.4$\pm$5.2  &  \textbf{27.8$\pm$4.1}  &  23.5$\pm$4.3           \\
      AmazonSift900-DslrSurf600    &  14.7$\pm$4.9  &  25.4$\pm$4.0  &  \textbf{26.5$\pm$3.8}  &  \textbf{26.5$\pm$4.6}  \\
      AmazonSift900-DslrSurf800    &  11.3$\pm$4.4  &  22.6$\pm$4.0  &  \textbf{25.5$\pm$4.4}  &  \textbf{25.5$\pm$4.6}  \\
      WebcamSurf600-DslrSift900    &  13.3$\pm$3.6  &  27.8$\pm$2.6  &  27.4$\pm$4.6           &  \textbf{29.8$\pm$3.5}  \\
      WebcamSurf600-DslrSurf800    &  18.5$\pm$5.9  &  27.7$\pm$4.6  &  28.4$\pm$3.0           &  \textbf{31.3$\pm$3.9}  \\
      WebcamSift900-DslrSurf600    &  13.1$\pm$2.4  &  25.2$\pm$5.6  &  27.5$\pm$4.1           &  \textbf{28.6$\pm$2.9}  \\
      WebcamSift900-DslrSurf800    &  14.5$\pm$3.4  &  25.6$\pm$4.5  &  \textbf{26.9$\pm$4.3}  &  \textbf{26.7$\pm$4.3}  \\
      \hline
    \end{tabular}
  \end{center}
  \caption{Object recognition accuracy. The best performances are highlighted.}
  \label{tab:objrec}
\end{table}

According to table \ref{tab:objrec}, we still observe the ineffectiveness of CCA+NN, and the accuracy improvement with an additional ITML metric learning step. 
Moreover, ARC outperforms CCA+ITML as usual on almost all groups. 
However, superiority of MLHD is not very significant comparing with ARC algorithm in this experiment. 
Although, on those groups whose target domains are DSLR, the performances of MLHD are generally better than those of ARC, the two algorithms' performances are comparable on those groups whose target domains are Webcam.
The reason may lie in that the training samples in target domains in this experiment are not enough due to the limited number of collected images. 
Note that only 11 samples per class are used. 
Consequently, both priors might be aligned biasedly forsuch small number of training samples.

{
\section{Conclusion}
\label{sec:conclusion}
In this paper, we proposed the MLHD model to learn a metric defined across the heterogeneous source and target domains, which is seldon touched to the best of our knowledge.
The proposed model aligns both the priors and posteriors in the source and the target domains at the same time.
Then we show that our model can be reparametrized into a single PSD matrix and use a LogDet function to regularize the model for the convenience of optimization. 
In the following, we give out the optimization method based on Bregman Projection method.
Next, we also show that the model can be easily kernelized by solving an equivalent optimization problem.
Finally, we validate its effectiveness on the multilingual retrieval task and the object recognition task under various situations.
}

\section{Acknowledge}
This work was supported in part by the NSFC of China Grant Nos. 61170151 and 60973097.

\bibliography{HSML}{}
\bibliographystyle{plain}
\end{document}